\newtheorem{proposition}{Proposition}
\providecommand{\keywords}[1]{\textbf{Keywords:} #1}
\DeclareMathOperator*{\minimize}{minimize}
\newcommand{\st}{\text{subject to}}
\newcommand{\half}{\frac{1}{2}}
\DeclareMathOperator{\tr}{trace}
\DeclareMathOperator*{\OD}{\odot}
\newcommand{\kr}[1]{\OD_{\substack{j=1\\j \neq #1}}^N}
\DeclareMathOperator*{\CD}{\circledast}
\newcommand{\hada}[1]{\CD_{\substack{j=1\\j \neq #1}}^N}
\newcommand{\Y}{\mathbf{Y}}
\newcommand{\W}{\mathbf{W}}
\renewcommand{\H}{\mathbf{H}}
\newcommand{\A}{\mathbf{A}}
\newcommand{\B}{\mathbf{B}}
\newcommand{\I}{\mathbf{I}}
\newcommand{\G}{\mathbf{G}}
\newcommand{\F}{\mathbf{F}}
\newcommand{\U}{\mathbf{U}}
\newcommand{\V}{\mathbf{V}}
\renewcommand{\L}{\mathbf{L}}
\newcommand{\N}{\mathbf{N}}
\newcommand{\T}{\mathbf{T}}
\newcommand{\D}{\mathbf{D}}
\newcommand{\Sb}{\mathbf{S}}
\newcommand{\x}{\mathbf{x}}
\newcommand{\z}{\mathbf{z}}
\renewcommand{\u}{\mathbf{u}}
\renewcommand{\c}{\mathbf{c}}
\renewcommand{\d}{\mathbf{d}}
\newcommand{\R}{\mathbb{R}}
\renewcommand{\O}{\mathcal{O}}
\newcommand{\Yt}{\underline{\Y}}
\newcommand{\Vt}{\underline{\V}}
\newcommand{\Yd}[1]{\Y_{\!(#1)\!}}
\begin{document}

%\title{AO-ADMM: An Algorithmic Framework for Universally Constrained Matrix and Tensor Factorization}
%\title{AO-ADMM: A Flexible Algorithmic Framework for Constrained Matrix and Tensor Factorization} %<--- less objectionable?
\title{A Flexible and Efficient Algorithmic Framework for Constrained Matrix and Tensor Factorization}

\author{Kejun Huang,
Nicholas D. Sidiropoulos,
and
Athanasios P. Liavas
}

\maketitle

\begin{abstract}
We propose a general algorithmic framework for constrained matrix and tensor factorization, which is widely used in signal processing and machine learning. The new framework is a hybrid between alternating optimization (AO) and the alternating direction method of multipliers (ADMM): each matrix factor is updated in turn, using ADMM, hence the name AO-ADMM. This combination can naturally accommodate a great variety of constraints on the factor matrices, and almost all possible loss measures for the fitting. Computation caching and warm start strategies are used to ensure that each update is evaluated efficiently, while the outer AO framework exploits recent developments in block coordinate descent (BCD)-type methods which help ensure that every limit point is a stationary point, as well as faster and more robust convergence in practice. Three special cases are studied in detail: non-negative matrix/tensor factorization, constrained matrix/tensor completion, and dictionary learning. Extensive simulations and experiments with real data are used to showcase the effectiveness and broad applicability of the proposed framework.
\end{abstract}
\keywords{
constrained matrix/tensor factorization, non-negative matrix/tensor factorization, canonical polyadic decomposition, PARAFAC, matrix/tensor completion, dictionary learning, alternating optimization, alternating direction method of multipliers.
}

\section{Introduction}
Constrained matrix and tensor factorization techniques are widely used for latent parameter estimation and blind source separation in signal processing, dimensionality reduction and clustering in machine learning, and numerous other applications in diverse disciplines, such as chemistry and psychology. Least-squares low-rank factorization of matrices and tensors without additional constraints is relatively well-studied, as in the matrix case the basis of any solution is simply the principal components of the singular value decomposition (SVD)~\cite{eckart1936approximation}, also known as principal component analysis (PCA), and in the tensor case alternating least squares (ALS) and other algorithms usually yield satisfactory results~\cite{tomasi2006comparison}. ALS is also used for matrix factorization, especially when the size is so large that performing the exact PCA is too expensive.

Whereas unconstrained matrix and tensor factorization algorithms are relatively mature, their {\em constrained} counterparts leave much to be desired as of this writing, and a unified framework that can easily and naturally incorporate multiple constraints on the latent factors is sorely missing. Existing algorithms are usually only able to handle one or at most few specialized constraints, and/or the algorithm needs to be redesigned carefully if new constraints are added. Commonly adopted constraints imposed on the latent factors include non-negativity \cite{lee1999learning}, sparsity (usually via sparsity-inducing $l_1$ regularization \cite{olshausen1997sparse}), and simplex constraints~\cite{hofmann1999probabilistic}, to name just a few.

On top of the need to incorporate constraints on the latent factors, many established and emerging signal processing applications entail cost ({\em loss}) functions that differ from classical least-squares. Important examples include matrix completion \cite{candes2009exact} where missing values are ignored by the loss function, and robust PCA \cite{candes2011robust} where the $l_1$ loss is used. In the matrix case without constraints on the latent factors, these can be formulated as convex problems and solved in polynomial-time. With explicit constraints imposed on the latent factors, and/or for tensor data, however, non-convex (multi-linear) formulations are unavoidable, and a unified algorithmic framework that can handle a variety of constraints and loss functions would be very welcome.

In this paper, we propose a general algorithmic framework that seamlessly and relatively effortlessly incorporates many common types of constraints and loss functions, building upon and bridging together the alternating optimization (AO) framework and the alternating direction method of multipliers (ADMM), hence the name AO-ADMM.
%While the AO framework provides monotone convergence for these (NP-)hard factorization problems, ADMM serves as a sub-routine that handles `universal' constraints and loss measures very efficiently, with per-iteration complexity similar to the unconstrained ALS algorithm or sub-optimal approaches like the multiplicative non-negative matrix factorization (NMF) update \cite{lee2001algorithms}.

While combining these frameworks may seem conceptually straightforward at first sight, what is significant is that AO-ADMM outperforms all prior algorithms for constrained matrix and tensor factorization under nonparametric constraints on the latent factors. One example is non-negative matrix factorization, where the prior art includes decades of research. This is the biggest but not the only advantage of AO-ADMM. Carefully developing various aspects of this combination, we show that
\begin{itemize}
\item AO-ADMM converges to a stationary point of the original NP-hard problem;
\item Using computation caching, warm-start, and good parameter settings, its per-iteration complexity is similar to that of ALS;
\item AO-ADMM can incorporate a wide-range of constraints and regularization penalties on the latent factors at essentially the same complexity;
\item It can also accommodate a wide variety of cost / loss functions, with only moderate increase in complexity relative to the classical least-squares loss; and
\item The core computations are exactly the same as ALS for unconstrained factorization, with some additional element-wise operations to handle constraints, making it easy to incorporate smart implementations of ALS, including sparse, parallel, and high-performance computing enhancements.  
\end{itemize}

%{\color{red}
%\noindent {\bf Contributions:} Non-negative matrix and tensor factorization have numerous applications, and a wide variety of algorithms have been proposed over the past two decades for their computation -- ranging from simple coordinate descent, to AO using sophisticated non-negative least squares solvers, and all-at-once projected gradient and (quasi-) Newton methods~\cite{cichocki2009nonnegative}. We propose a hybrid between AO and ADMM, which may seem like just another take on a known problem, combining two known methods. As we will show though, this insightful combination, together with carefully chosen parameter settings, outperforms all prior art in this area, in terms of both efficiency and flexibility.}

\subsection{Notation}
We denote the (approximate) factorization of a matrix $\Y\approx\W\H^T$, where $\Y$ is $m \times n$, $\W$ is $m \times k$, and $\H$ is $n \times k$, with $k \leq m, n$, and in most cases much smaller. Note that adding constraints on $\W$ and $\H$ may turn the solution from easy to find (via SVD) but non-identifiable, to NP-hard to find but identifiable. It has been shown that simple constraints like non-negativity and sparsity can make the factors essentially unique, but at the same time, computing the optimal solution becomes NP-hard---see \cite{huang2014tsp} and references therein.

An $N$-way array of dimension $n_1 \times n_2 \times ... \times n_N$, with $N \geq 3$, is denoted with an underscore, e.g., $\Yt$. In what follows, we focus on the so-called parallel factor analysis (PARAFAC) model, also known as canonical decomposition (CANDECOMP) or canonical polyadic decomposition (CPD), which is essentially unique under mild conditions \cite{sidiropoulos2000uniqueness}, but constraints certainly help enhance estimation performance, and even identifiability. The factorization is denoted as $\Yt \approx [\H_d]_{d=1}^N$, which is a concise way of representing the model
\[
\Yt(i_1,...,i_N) \approx \sum_{j=1}^{k} \prod_{d=1}^{N} \H_d(i_d,j),~~
\forall i_1,...,i_N.
\]
Each matrix $\H_d$ is of size $n_d \times k$, corresponding to the factor of the $d$-th mode.

\subsection{Multi-linear Algebra Basics}
With the increasing interest in tensor data processing, there exist many tutorials on this topic, for example, \cite{smilde2005multi,kolda2009tensor}. Here we briefly review some basic multi-linear operations that will be useful for the purposes of this paper, and refer the readers to those tutorials and the references therein for a more comprehensive introduction.

The {\bf mode-$d$ matricization}, also known as mode-$d$ matrix unfolding, of $\Yt$, denoted as $\Yd{d}$, is a matrix of size $\prod_{j=1,j \neq d}^{N}n_j \times n_d$. Each row of $\Yd{d}$ is a vector obtained by fixing all the indices of $\Yt$ except the $d$-th one, and the matrix is formed by stacking these row vectors by traversing the rest of the indices from $N$ back to 1. As an example, for $N=3$, the three matricizations are
\[
\Yd{\!1\!} \!=\!\! \begin{bmatrix}
\Yt(:,1,:)^T \\ \vdots \\ \Yt(:,\!n_2\!,:)^T
\end{bmatrix}\!,
\Yd{\!2\!} \!=\!\! \begin{bmatrix}
\Yt(1,:,:)^T \\ \vdots \\ \Yt(\!n_1\!,:,:)^T
\end{bmatrix}\!,
\Yd{\!3\!} \!=\!\! \begin{bmatrix}
\Yt(1,:,:)   \\ \vdots \\ \Yt(\!n_1\!,:,:)
\end{bmatrix}\!.
\]
Notice that, though essentially in the same spirit, this definition of mode-$d$ matricization may be different from other expressions that have appeared in the literature, but we adopt this one for ease of our use.

The {\bf Khatri-Rao product} of matrices $\A$ and $\B$ having the same number of columns, denoted as $\A \odot \B$, is defined as the column-wise Kronecker product of $\A$ and $\B$. More explicitly, if $\A$ is of size $n \times k$, then
\[
\begin{aligned}
\A \odot \B =& \begin{bmatrix}
\A(:,1) \otimes \B(:,1) & \cdots & \A(:,k) \otimes \B(:,k)
\end{bmatrix} \\
=&\begin{bmatrix}
\A(1,1)\B(:,1) & & \A(1,k)\B(:,k) \\
\vdots & \cdots & \vdots \\
\A(n,1)\B(:,1) & & \A(n,k)\B(:,k)
\end{bmatrix}
\end{aligned}
\]
{\color{red}The Khatri-Rao product is associative (although not commutative).}
We {\color{red}therefore} generalize the operator $\odot$ to accept more than two arguments in the following way
\[
\kr{d}\H_i = \H_1 \odot \cdots \odot \H_{d-1} \odot \H_{d+1} \odot \cdots \odot \H_N.
\]
With the help of this notation, if $\Yt$ admits an exact PARAFAC model $\Yt = [\H_d]_{d=1}^N$, then it can be expressed in matricized form as
\[
\Yd{d} = \left( \kr{d}\H_j \right) \H_d^T.
\]

Lastly, a nice property of the Khatri-Rao product is that
\[
\left(\A \odot \B\right)^T\left(\A \odot \B\right) =
\A^T\A \circledast \B^T\B,
\]
where $\circledast$ denotes the element-wise (Hadamard) matrix product. More generally, it holds that
\[
\left( \kr{d}\H_j \right)^T\left( \kr{d}\H_j \right) = \hada{d}\H_j^T\H_j.
\]

\section{Alternating Optimization Framework: Preliminaries}
We start by formulating the factorization problem as an optimization problem in the most general form
\begin{equation}\label{problem:factorization}
\minimize_{\H_1,...,\H_N}~~
			l\left( \Yt - [\H_d]_{d=1}^N \right)
			+ \sum_{d=1}^{N} r_d(\H_d),
\end{equation}
with a slight abuse of notation by assuming $N$ can also take the value of 2. In (\ref{problem:factorization}), $l(\cdot)$ can be any loss measure, most likely separable down to the entries of the argument, and $r_d(\H_d)$ is the generalized regularization on $\H_d$, which may take the value of $+\infty$ so that any hard constraints can also be incorporated. For example, if we require that the elements of $\H_d$ are nonnegative, denoted as $\H_d \geq 0$, then
\[
r_d(\H_d) = \left\{
\begin{array}{ll}
0,	& \H_d \geq 0, \\
+\infty, & \text{otherwise.}
\end{array}
\right.
\]

Because of the multi-linear term $[\H_d]_{d=1}^N$, the regularized fitting problem is non-convex, and in many cases NP-hard \cite{vavasis2009complexity,hillar2013most}. A common way to handle this is to use the alternating optimization (AO) technique, i.e., update each factor $\H_d$ in a cyclic fashion. The popular ALS algorithm is a special case of this when $l(\cdot)$ is the least-squares loss, and there is no regularization. In this section, we will first revisit the ALS algorithm, with the focus on the per-iteration complexity analysis. Then, we will briefly discuss the convergence of the AO framework, especially some recent advances on the convergence of the traditional block coordinate descent (BCD) algorithm.

\subsection{Alternating Least-Squares Revisited}
Consider the unconstrained matrix factorization problem
\begin{equation}\label{eq:ls}
\minimize_{\W,\H}~~ \half\| \Y - \W\H^T \|_F^2,
\end{equation}
and momentarily ignore the fact that the optimal solution of (\ref{eq:ls}) is given by the SVD. The problem (\ref{eq:ls}) is non-convex in $\W$ and $\H$ jointly, but is convex if we fix one and only treat the other as variable. Supposing $\W$ is fixed, the sub-problem for $\H$ becomes the classical linear least squares and, if $\W$ has full column rank, the unique solution is given by
\begin{equation}\label{eq:lssol}
\H^T = (\W^T\W)^{-1} \W^T\Y.
\end{equation}
In practice, the matrix inverse $(\W^T\W)^{-1}$ is almost never explicitly calculated. Instead, the Cholesky decomposition of the Gram matrix $\W^T\W$ is computed, and for each column of $\W^T\Y$, a forward and a backward substitution are performed to get the corresponding column of $\H^T$. Since $\W$ is $m \times k$ and $\Y$ is $m \times n$, forming $\W^T\W$ and $\W^T\Y$ takes $\O(mk^2)$ and $\O(mnk)$ flops, respectively, computing the Cholesky decomposition requires $\O(k^3)$ flops, and finally the back substitution step takes $\O(nk^2)$ flops, similar to a matrix multiplication. If $m,n > k $, then the overall complexity is $\O(mnk)$.

An important implication is the following. Clearly, if $n=1$, then the cost of a least squares is $\O(mk^2)$. However, as $n$ grows, the complexity does not simply grow proportionally with $n$, but rather goes to $\O(mnk)$. The reason is that, although it seems we are now trying to solve $n$ least-squares problems, they all share the same matrix $\W$, thus the Cholesky decomposition of $\W^T\W$ can be reused throughout. This is a very nice property of the unconstrained least squares problems, which can be exploited to improve the computational efficiency of the ALS algorithm.

One may recall that another well-adopted method for least-squares is to compute the QR decomposition of $\W$ as $\W={\bf QR}$, so that $\H^T = {\bf R}^{-1}{\bf Q}^T\Y$.  This can be shown to give the same computational complexity as the Cholesky version, and is actually more stable numerically. However, if $\W$ has some special structure, it is easier to exploit this structure if we use Cholesky decomposition. Therefore, in this paper we only consider solving least-squares problems using the Cholesky decomposition.

One important structure that we encounter is in the tensor case. For the ALS algorithm for PARAFAC, the update of $\H_d$ is the solution of the following least squares problem
\[
\minimize_{\H_d}~~ \half\left\| \Yd{d} -
				   \left( \kr{d}\H_j \right)\H_d^T
				   	\right\|_F^2,
\]
and the solution is given by
\[
\H_d^T = \left( \hada{d}\H_j^T\H_j \right)^{-1}
		\left( \kr{d}\H_j \right)^T \Yd{d}.
\]
As we can see, the Gram matrix is computed efficiently by exploiting the structure, and its Cholesky decomposition can be reused. The most expensive operation is actually the computation of $( \OD_{j \neq d} \H_j )^T \Y_{(d)}$, but very efficient algorithms for this (that work without explicitly forming the Khatri-Rao product and the $d$-mode matricization) are available \cite{bader2007efficient,kang2012gigatensor,niranjay,choi2014dfacto,smith2015splatt}. If we were to adopt the QR decomposition approach, however, none of these methods could be applied.

%\textsc{mttkrp}, named for the aforementioned operation in the \texttt{tensor\_toolbox} \cite{TTB_Software}, is short for matricized tensor times Khatri-Rao product. As explained before, this operation is observed as the computation bottleneck for the ALS algorithm. The \texttt{tensor\_toolbox} provided two implementations, one is a straightforward dense implementation and one is for sparse tensors that calls sparse tensor-vector multiplications \cite{bader2007efficient}. Ravindran {\it et al.} \cite{niranjay} developed a memory-efficient algorithm for dense implementation. For sparse tensors, parallel and/or distributed algorithms have been proposed to further scale up the computation ability for large-sized data \cite{kang2012gigatensor,choi2014dfacto,smith2015splatt}. As we will see later, these efficient implementations can also be used in our proposed algorithm for constrained PARAFAC.

In summary, least squares is a very mature technique with many favorable properties that render the ALS algorithm very efficient. On the other hand, most of the algorithms that deal with problems with constraints on the factors or different loss measures do not inherit these good properties. The goal of this paper is to propose an AO-based algorithmic framework, which can easily handle many types of constraints on the latent factors and many loss functions, with per-iteration complexity essentially the same as the complexity of an ALS step.

\subsection{The Convergence of AO}
Consider the following (usually non-convex) optimization problem with variables separated into $N$ blocks, each with its own constraint set
\begin{equation}\label{eq:bcd}
\begin{aligned}
\minimize_{\x_1,...,\x_N}~~ & f(\x_1,...,\x_N) \\
\st~~ & \x_d \in {\cal X}_d,~~ \forall d=1,...,N.
\end{aligned}
\end{equation}
A classical AO method called block coordinate descent (BCD) cyclically updates $\x_d$ via solving
\begin{equation}\label{eq:bcd_update}
\begin{aligned}
\minimize_{\bf \xi}~~ & f(\x_1^{r+1},...,\x_{d-1}^{r+1},{\bf \xi},\x_{d+1}^r,...,\x_N^r) \\
\st~~ & {\bf \xi} \in {\cal X}_d,
\end{aligned}
\end{equation}
at the ($r+1$)-st iteration \cite[\S\,2.7]{bertsekas1999nonlinear}. Obviously, this will decrease the objective function monotonically. {\color{blue} If some additional assumptions are satisfied, then we can have stronger convergence claims \cite[Proposition 2.7.1]{bertsekas1999nonlinear}. Simply put, if the sub-problem (\ref{eq:bcd_update}) is convex and has a \emph{unique} solution, then every limit point is a stationary point; furthermore, if ${\cal X}_1,...,{\cal X}_N$ are all compact, {\color{red} which implies that the sequence generated by BCD is bounded,} then BCD is guaranteed to converge to a stationary point, even if (\ref{eq:bcd}) is non-convex~\cite{tseng2001convergence}.}

In many cases (\ref{eq:bcd_update}) is convex, but the uniqueness of the solution is very hard to guarantee. A special case that does not require uniqueness, first noticed by Grippo and Sciandrone \cite{Grippo2000}, is when $N=2$. On hindsight, this can be explained by the fact that for $N=2$, BCD coincides with the so-called maximum block improvement (MBI) algorithm \cite{chen2012maximum}, which converges under very mild conditions. However, instead of updating the blocks cyclically, MBI only updates the one block that decreases the objective the most, thus the per-iteration complexity is $(N-1)$ times higher than BCD; therefore MBI is not commonly used in practice when $N$ is large.

Another way to ensure convergence, proposed by Razaviyayn {\it et al.} \cite{razaviyayn2013unified}, is as follows. Instead of updating $\x_d$ as the solution of (\ref{eq:bcd_update}), the update is obtained by solving a majorized version of (\ref{eq:bcd_update}), called the block successive upper-bound minimization (BSUM). The convergence of BSUM is essentially the same, but now we can deliberately design the majorizing function to ensure that the solution is unique. One simple way to do this is to put a proximal regularization term
\begin{equation}\label{eq:bsum_update}
\begin{aligned}
\minimize_{\bf \xi}~ & f(\x_1^{r\!+\!1}\!,...,\x_{d\!-\!1}^{r\!+\!1}\!,{\bf \xi},\x_{d\!+\!1}^r\!,...,\!\x_N^r)
						\!+\! \frac{\mu}{2}\|{\bf \xi} \!-\! \x_d^r\|^2 \\
\st~ & ~~{\bf \xi} \in {\cal X}_d,
\end{aligned}
\end{equation}
for some $\mu > 0$, where $\x_d^r$ is the update of $\x_d$ from the previous iteration. If (\ref{eq:bcd_update}) is convex, then (\ref{eq:bsum_update}) is strongly convex, which gives a unique minimizer.
{\color{blue} Thus, the algorithm is guaranteed to converge to a stationary point, as long as the sequence generated by the algorithm is bounded. Similar results are also proved in \cite{xu2013block}, where the authors used a different majorization for constrained matrix/tensor factorization; we shall compare with them in numerical experiments.}

\section{Solving the Sub-problems Using ADMM}
The AO algorithm framework is usually adopted when each of the sub-problems can be solved efficiently. This is indeed the case for the ALS algorithm, since each update is in closed-form. For the general factorization problem (\ref{problem:factorization}), we denote the sub-problem as
\begin{equation}\label{eq:loss}
\minimize_{\H}~~ l(\Y-\W\H^T) + r(\H).
\end{equation}
For the matrix case, this is simply the sub-problem for the right factor, and one can easily figure out the update of the left factor by transposing everything; for the tensor case, this becomes the update of $\H_d$ by setting $\Y=\Yd{d}$ and $\W=\OD_{j \neq d} \H_j$. This is for ease of notation, as these matricizations and Khatri-Rao products need not be actually computed explicitly. Also notice that this is the sub-problem for the BCD algorithm, and for better convergence we may want to add a proximal regularization term to (\ref{eq:loss}), which is very easy to handle, thus omitted here.

We propose to use the alternating direction method of multipliers (ADMM) to solve (\ref{eq:loss}). ADMM, if used in the right way, inherits a lot of the good properties that appeared in each update of the ALS method. Furthermore, the AO framework naturally provides good initializations for ADMM, which further accelerates its convergence for the subproblem. As a preview, the implicit message here is that {\em closed-form solution is not necessary for computational efficiency}, as we will explain later. After a brief introduction of ADMM, we first apply it to (\ref{eq:loss}) which has least-squares loss, and then generalize it to universal loss measures.

\subsection{Alternating Direction Method of Multipliers}
ADMM solves convex optimization problems that can be written in the form
\[
\begin{aligned}
\minimize_{\x,\z}~~ & f(\x) + g(\z) \\
\st~~ & \A\x + \B\z = \c,
\end{aligned}
\]
by iterating the following updates
\begin{equation*}
\begin{aligned}
\x & \leftarrow \arg\min_{\x} f(\x) +
				(\rho/2)\| \A\x + \B\z - \c + \u\|_2^2, \\
\z & \leftarrow \arg\min_{\z} g(\z) +
				(\rho/2)\| \A\x + \B\z - \c + \u \|_2^2, \\
\u & \leftarrow \u + (\A\x+\B\z-\c),
\end{aligned}
\end{equation*}
where $\u$ is a scaled version of the dual variables corresponding to the equality constraint $\A\x+\B\z=\c$, and $\rho$ is specified by the user.

A comprehensive review of the ADMM algorithm can be found in \cite{Boyd2011} and the references therein. The beauty of ADMM is that it converges under mild conditions (in the convex case), while artful splitting of the variables into the two blocks $\x$ and $\z$ can yield very efficient updates, and/or distributed implementation. Furthermore, if $f$ is strongly convex and Lipschitz continuous, then linear convergence of ADMM can be achieved; cf. guidelines on the optimal step-size $\rho$ in \cite[\S~9.3]{monotone}, and \cite{ghadimi2015optimal} for an analysis of ADMM applied to quadratic programming.

\subsection{Least-Squares Loss}
We start by considering $l(\cdot)$ in (\ref{eq:loss}) to be the least-squares loss $(1/2)\|\cdot\|_F^2$. The problem is reformulated by introducing a $k \times n$ auxiliary variable $\tilde{\H}$
\begin{equation}\label{problem:least-squares}
\begin{aligned}
\minimize_{\H,\tilde{\H}}~~ & \half \| \Y - \W\tilde{\H} \|_F^2 + r(\H)\\
\st~~ & \H = \tilde{\H}^T.
\end{aligned}
\end{equation}

It is easy to adopt the ADMM algorithm and derive the following iterates:
\begin{equation}\label{ADMM:least-squares}
\begin{aligned}
\tilde{\H} & \leftarrow (\W^T\W + \rho\I)^{-1}
							(\W^T\Y + \rho(\H+\U)^T),\\
\H & \leftarrow \arg\min_{\H} r(\H) +
				\frac{\rho}{2} \| \H - \tilde{\H}^T + \U \|_F^2, \\
\U & \leftarrow \U + \H - \tilde{\H}^T.
\end{aligned}
\end{equation}
One important observation is that, throughout the iterations the same matrix $\W^T\Y$ and matrix inversion $(\W^T\W + \rho\I)^{-1}$ are used. Therefore, to save computations, we can cache $\W^T\Y$ and the Cholesky decomposition of $\W^T\W + \rho\I=\L\L^T$. Then the update of $\tilde{\H}$ is dominated by one forward substitution and one backward substitution, resulting in a complexity of $\O(k^2n)$.

The update of $\H$ is the so-called \emph{proximity operator} of the function $(1/\rho)r(\cdot)$ around point $(\tilde{\H}^T-\U)$, and in particular if $r(\cdot)$ is the indicator function of a convex set, then the update of $\H$ becomes a projection operator, a special case of the proximity operator. For a lot of regularizations/constraints, especially those that are often used in matrix/tensor factorization problems, the update of $\H$ boils down to element-wise updates, i.e., costing $\O(kn)$ flops. Here we list some of the most commonly used constraints/regularizations in the matrix factorization problem, and refer the reader to \cite[\S 6]{Parikh2014}. For simplicity of notation, let us define $\bar{\H}=\tilde{\H}^T-\U$.
\begin{itemize}
\item Non-negativity. In this case $r(\cdot)$ is the indicator function of $\R_+$, and the update is simply zeroing out the negative values of $\bar{\H}$. In fact, any element-wise bound constraints can be handled similarly, since element-wise projection is trivial.
\item Lasso regularization. For $r(\H)=\lambda\|\H\|_1$, the sparsity inducing regularization, the update is the well-known \emph{soft-thresholding} operator: $h_{ij}=[1-(\lambda/\rho)|\bar{h}_{ij}|^{-1}]_+\bar{h}_{ij}$. The element-wise thresholding can also be converted to block-wise thresholding if one wants to impose structured sparsity, leading to the group Lasso regularization.
\item Simplex constraint. In some probabilistic model analysis we need to constrain the columns or rows to be element-wise non-negative and sum up to one. As described in \cite{duchi2008efficient}, this projection can be done with a randomized algorithm with linear-time complexity on average.
\item Smoothness regularization. We can encourage the columns of $\H$ to be smooth by adding the regularization $r(\H)=(\lambda/2)\|\T\H\|_F^2$ where $\T$ is an $n \times n$ tri-diagonal matrix with 2 on the diagonal and $-1$ on the super- and sub-diagonal. Its proximity operator is given by $\H = \rho(\lambda\T^T\T+\rho\I)^{-1}\bar{\H}$. Although it involves a large matrix inversion, notice that it has a fixed bandwidth of 2, thus can be efficiently calculated in $\O(kn)$ time \cite[\S 4.3]{golub2012matrix}.
\end{itemize}

We found empirically that by setting $\rho=\|\W\|_F^2/k$, the ADMM iterates for the regularized least-squares problem (\ref{problem:least-squares}) converge very fast. This choise of $\rho$ can be seen as an approximation to the optimal $\rho$ given in \cite{monotone}, but much cheaper to obtain. With a good initialization, naturally provided by the AO framework, the update of $\H$ usually does not take more than $5$ or $10$ ADMM iterations, and very soon reduces down to only 1 iteration. The proposed algorithm for the sub-problem (\ref{problem:least-squares}) is summarized in Alg. \ref{alg:ADMMls}. As we can see, the pre-calculation step takes $\O(k^2m + k^3)$ flops to form the Cholesky decomposition, and $\O(mnk)$ flops to form $\F$. Notice that these are actually the only computations in Alg.~\ref{alg:ADMMls} that involve $\W$ and $\Y$, which implies that in the tensor case, all the tricks to compute $\W^T\W$ and $\W^T\Y$ can be applied here, and then we do not need to worry about them anymore. The computational load of each ADMM iteration is dominated by the $\tilde{\H}$-update, with complexity $\O(k^2n)$.

It is interesting to compare Alg.~\ref{alg:ADMMls} with an update of the ALS algorithm, whose complexity is essentially the same as the pre-calculation step plus one iteration. For a small number of ADMM iterations, the complexity of Alg. \ref{alg:ADMMls} is of the same order as an ALS step.

\begin{algorithm}[h]
\KwIn{$\Y$, $\W$, $\H$, $\U$, $k$}
Initialize $\H$ and $\U$\;
$\G = \W^T\W$\;
$\rho = \tr(\G)/k$ \;
Calculate $\L$ from the Cholesky decomposition of $\G+\rho\I=\L\L^T$\;
$\F=\W^T\Y$ \;
\Repeat{$r<\varepsilon$ and $s<\varepsilon$}{
$\tilde{\H} \leftarrow \L^{-T}\L^{-1}(\F+\rho(\H+\U)^T)$
		using forward/backward substitution \;
$\H \leftarrow \arg\min_{\H} r(\H) +
				\frac{\rho}{2} \| \H - \tilde{\H}^T + \U \|_F^2$ \;
$\U \leftarrow \U + \H - \tilde{\H}^T$ \;
}(\hfill $r$ and $s$ defined in {(\ref{eq:r})} and {(\ref{eq:s})})
\Return $\H$ and $\U$.
\caption{Solve (\ref{problem:least-squares}) using ADMM}
\label{alg:ADMMls}
\end{algorithm}

For declaring termination, we adopted the general termination criterion described in \cite[\S 3.3.1]{Boyd2011}. After some calibration, we define the relative primal residual
\begin{equation}\label{eq:r}
r = \| \H-\tilde{\H}^T \|_F^2 / \| \H \|_F^2,
\end{equation}
and the relative dual residual
\begin{equation}\label{eq:s}
s = \| \H - \H_0 \|_F^2 / \| \U \|_F^2,
\end{equation}
where $\H_0$ is $\H$ from the previous ADMM iteration, and terminate Alg.~\ref{alg:ADMMls} if both of them are smaller than some threshold.

Furthermore, if the BSUM framework is adopted, we need to solve a proximal regularized version of (\ref{problem:least-squares}), and that term can easily be absorbed into the update of $\tilde{\H}$.

\subsection{General Loss}\label{sec:loss}
Now let us derive an ADMM algorithm to solve the more general problem (\ref{eq:loss}). For this case, we reformulate the problem by introducing two auxiliary variables $\tilde{\H}$ and $\tilde{\Y}$
\begin{equation}\label{problem:general_loss}
\begin{aligned}
\minimize_{\H, \tilde{\H}, \tilde{\Y}}~~& l(\Y-\tilde{\Y}) + r(\H) \\
\st~~ & \H = \tilde{\H}^T,~~\tilde{\Y} = \W\tilde{\H}.
\end{aligned}
\end{equation}
To apply ADMM, let $\tilde{\H}$ be the first block, and $(\tilde{\Y},\H)$ be the second block, and notice that in the second block update $\tilde{\Y}$ and $\H$ can in fact be updated independently. This yields the following iterates:
\begin{equation}\label{ADMM:general_loss}
\begin{aligned}
&\,~~~\tilde{\H} \leftarrow (\W^{\!T}\W + \rho\I)^{\!-\!1}
		(\W^{\!T\!}(\tilde{\Y} \!+\! \V) +
		\rho(\H \!+\! \U)^{\!T})\\
&\left\{
\begin{aligned}
\H &\leftarrow \arg\min_{\H}~ r(\H) +
					\frac{\rho}{2} \| \H - \tilde{\H}^T + \U \|_F^2, \\
\tilde{\Y} &\leftarrow \arg\min_{\tilde{\Y}}~ l(\Y-\tilde{\Y}) +
					\half \|\tilde{\Y} - \W\tilde{\H} + \V\|_F^2,
\end{aligned}
\right.	\\
&\left\{
\begin{aligned}
\U &\leftarrow \U + \H - \tilde{\H}^T, \\
\V &\leftarrow \V + \tilde{\Y} - \W\tilde{\H}.
\end{aligned}
\right.
\end{aligned}
\end{equation}
where $\U$ is the scaled dual variable corresponding to the constraint $\H = \tilde{\H}^T$, and $\V$ is the scaled dual variable corresponding to the equality constraint $\tilde{\Y}=\W\tilde{\H}$. Notice that we set the penalty parameter $\rho$ corresponding to the second constraint to be 1, since it works very well in practice, and also leads to very intuitive results for some loss functions. This can also be interpreted as first pre-conditioning this constraint to be $\frac{1}{\sqrt{\rho}}\tilde{\Y} = \frac{1}{\sqrt{\rho}}\W\tilde{\H}$, and then a common $\rho$ is used. Again we set $\rho=\|\W\|_F^2/k$.

As we can see, the update of $\tilde{\H}$ is simply a linear least squares problem, and all the previous discussion about caching the Cholesky decomposition applies. It is also easy to absorb an additional proximal regularization term into the update of $\tilde{\H}$, if the BSUM framework is adopted. The update of $\tilde{\Y}$ is (similar to the update of $\H$) a proximity operator, and since almost all loss functions we use are element-wise, the update of $\tilde{\Y}$ is also very easy. The updates for some of the most commonly used non-least-squares loss functions are listed below. For simplicity, we define $\bar{\Y} = \W\tilde{\H} - \V$, similar to the previous sub-section.

\begin{itemize}
\item Missing values. In the case that only a subset of the entries in $\Y$ are available, a common way to handle this is to simply fit the low-rank model only to the available entries. Let ${\cal A}$ denote the index set of the available values in $\Y$, then the loss function becomes $l(\Y-\tilde{\Y})= \half\sum_{(i,j)\in \cal A}(y_{ij}-\tilde{y}_{ij})^2$. Thus, the update of $\tilde{\Y}$ in (\ref{ADMM:general_loss}) becomes
\[
\tilde{y}_{ij} = \left\{\begin{array}{ll}
\half ( y_{ij} + \bar{y}_{ij} ), & ~~(i,j) \in {\cal A}, \\
\bar{y}_{ij}, & ~~\text{otherwise}.
\end{array}
\right.
\]
\item Robust fitting. In the case that data entries are not uniformly corrupted by noise but only sparingly corrupted by outliers, or when the noise is dense but heavy-tailed (e.g., Laplacian-distributed), we can use the $l_1$ norm as the loss function for robust (resp. maximum-likelihood) fitting, i.e., $l(\Y-\tilde{\Y})=\|\Y-\tilde{\Y}\|_1$. This is similar to the $l_1$ regularization, and the element-wise update is
\[
\tilde{y}_{ij} = \left\{\begin{array}{ll}
y_{ij},~&~|\bar{y}_{ij}-y_{ij}| \leq 1, \\
\bar{y}_{ij}-1,~&~\bar{y}_{ij}-y_{ij} > 1, \\
\bar{y}_{ij}+1,~&~\bar{y}_{ij}-y_{ij} < -1.
\end{array}
\right.
\]

\item Huber fitting. Another way to deal with possible outliers in $\Y$ is to use the Huber function to measure the loss $l(\Y-\tilde{\Y})=\sum_{i,j}\phi_{\lambda}(y_{ij}-\tilde{y}_{ij})$ where
\[
\phi_{\lambda}(z) = \left\{\begin{array}{ll}
\half z^2,~&~|z| \leq \lambda, \\
\lambda|z|-\half\lambda^2,~&~ \text{otherwise.}
\end{array}
\right.
\]
The element-wise closed-form update is
\[
\tilde{y}_{ij}=\left\{\begin{array}{ll}
\half(\bar{y}_{ij}+y_{ij}),~&~|\bar{y}_{ij}-y_{ij}| \leq 2\lambda,\\
\bar{y}_{ij}-\lambda,~&~ \bar{y}_{ij}-y_{ij} > 2\lambda,\\
\bar{y}_{ij}+\lambda,~&~ \bar{y}_{ij}-y_{ij} < -2\lambda.
\end{array}
\right.
\]

\item Kullback-Leibler divergence. A commonly adopted loss function for non-negative integer data is the Kullback-Leibler (K-L) divergence defined as
\[
D(\Y||\tilde{\Y}) = \sum_{i,j} \left(
y_{ij}\log\frac{y_{ij}}{\tilde{y}_{ij}} - y_{ij} + \tilde{y}_{ij}
\right)
\]
for which the proximity operator is
\[
\tilde{\Y} = \half \left(
(\bar{\Y}-1) + \sqrt{(\bar{\Y}-1)^2+4\Y}
\right) ,
\]
where all the operations are taken element-wise \cite{sun2014alternating}. Furthermore, the K-L divergence is a special case of certain families of divergence functions, such as $\alpha$-divergence and $\beta$-divergence \cite{cichocki2009fast}, whose corresponding updates are also very easy to derive (boil down to the proximity operator of a scalar function).
\end{itemize}

An interesting observation is that if the loss function is in fact the least-squares loss, the matrix $(\tilde{\Y}+\V)$ that $\tilde{\H}$ is trying to fit in (\ref{ADMM:general_loss}) is the data matrix $\Y$ {\it per se}. Therefore, the update rule (\ref{ADMM:general_loss}) boils down to the update rule (\ref{ADMM:least-squares}) in the least-squares loss case, with some redundant updates of $\tilde{\Y}$ and $\V$. The detailed ADMM algorithm for (\ref{problem:general_loss}) is summarized in Alg.~\ref{alg:ADMMgl}. We use the same termination criterion as in Alg.~\ref{alg:ADMMls}.

\begin{algorithm}[h]
\KwIn{$\Y$, $\W$, $\H$, $\U$, $\tilde{\Y}$, $\V$, $k$}
Initialize $\H$, $\U$, $\tilde{\Y}$, and $\V$\;
$\G = \W^T\W$\;
$\rho = \tr(\G)/k$ \;
Calculate $\L$ from the Cholesky decomposition of $\G+\rho\I=\L\L^T$\;
\Repeat{$r<\varepsilon$ and $s<\varepsilon$}{
$\tilde{\H} \leftarrow \L^{-T}\L^{-1}(\W^{T}(\tilde{\Y} + \V)+\rho(\H+\U)^T)$
		using forward/backward substitution \;
$\H \leftarrow \arg\min_{\H} r(\H) +
				\frac{\rho}{2} \| \H - \tilde{\H}^T + \U \|_F^2$ \;
$\tilde{\Y} \leftarrow \arg\min_{\tilde{\Y}}~ l(\Y-\tilde{\Y}) +
					\half \|\tilde{\Y} - \W\tilde{\H} + \V\|_F^2$ \;
$\U \leftarrow \U + \H - \tilde{\H}^T$ \;
$\V \leftarrow \V + \tilde{\Y} - \W\tilde{\H}$ \;
}(\hfill $r$ and $s$ defined in {(\ref{eq:r})} and {(\ref{eq:s})})
\Return $\H$, $\U$, $\tilde{\Y}$, and $\V$.
\caption{Solve (\ref{problem:general_loss}) using ADMM}
\label{alg:ADMMgl}
\end{algorithm}

Everything seems to be in place to seamlessly move from the least-squares loss to arbitrary loss. Nevertheless, closer scrutiny reveals that some compromises must be made to take this leap. One relatively minor downside is that with a general loss function we may lose the linear convergence rate of ADMM -- albeit with the good initialization naturally provided by the AO framework and our particular choice of $\rho$, it still converges very fast in practice. The biggest drawback is that, by introducing the auxiliary variable $\tilde{\Y}$ and its dual variable $\V$, the big matrix product $\W^T(\tilde{\Y}+\V)$ must be re-computed in each ADMM iteration, whereas in the previous case one only needs to compute $\W^T\Y$ once. This is the price we must pay; but it can be moderated by controlling the maximum number of ADMM iterations.

\noindent {\bf Scalability considerations:}
As big data analytics become increasingly common, it is important to keep scalability issues in mind as we develop new analysis methodologies and algorithms. Big data $\Yt$ is usually stored as a sparse array, i.e., a list of (\texttt{index,value}) pairs, with the unlisted entries regarded as zeros or missing.
With the introduction of $\tilde{\Y}$ and $\V$, both of size($\Yt$), one hopes to be able to avoid dense operations. Fortunately, for some commonly used loss functions, this is possible. Notice that by defining $\bar{\Y}=\W\tilde{\H}-\V$, the $\V$-update essentially becomes
\[
\V \leftarrow \tilde{\Y} - \bar{\Y},
\]
which means a significant portion of entries in $\V$ are constants---0 if the entries are regarded as missing, $\pm 1$ or $\pm \lambda$ in the robust fitting or Huber fitting case if the entries are regarded as ``corrupted''---thus they can be efficiently stored as a sparse array. As for $\tilde{\Y}$, one can simply generate it ``on-the-fly'' using the closed-form we provided earlier (notice that $\bar{\Y}$ has the memory-efficient ``low-rank plus sparse'' structure). The only occasion that $\tilde{\Y}$ is needed is when computing $\W^T\tilde{\Y}$.

\section{Summary of the Proposed Algorithm}
We propose to use Alg.~\ref{alg:ADMMls} or \ref{alg:ADMMgl} as the core sub-routine for alternating optimization. The proposed ``universal'' multi-linear factorization algorithm is summarized as Alg.~\ref{alg:AO-ADMM}. A few remarks on implementing Alg.~\ref{alg:AO-ADMM} are in order.

\begin{algorithm}[t]
Initialize $\H_1$,...,$\H_N$\;
Initialize $\U_1$,...,$\U_N$ to be all zero matrices\;
\uIf{least-squares loss}{
\Repeat{convergence}{
\For{$d=1,...,N$}{
$\Y=\Yd{d}$ and $\W=\OD_{j \neq d} \H_j$
\tcp*{\footnotesize not necessarily formed explicitly}
update $\H_d$ and $\U_d$ using Alg.~\ref{alg:ADMMls} initialized with the previous $\H_d$ and $\U_d$\;
}
update $\mu$ if necessary
\tcp*{\footnotesize refer to (\ref{eq:mu})}}
}
\Else{
Initialize $\tilde{\Yt}\leftarrow \Yt$, $\Vt \leftarrow \underline{\bf 0}$\;
\Repeat{convergence}{
\For{$d=1,...,N$}{
$\Y=\Yd{d}$ and $\W=\OD_{j \neq d} \H_j$
\tcp*{\footnotesize not necessarily formed explicitly}
update $\H_d$, $\!\!\U_d$, $\!\!\tilde{\Y}_{\!(d)\!}$, $\!\!\V_{\!(d)\!}$ using Alg.~\ref{alg:ADMMgl} initialized with the previous $\H_d$, $\!\!\U_d$, $\!\!\tilde{\Y}_{\!(d)\!}$, $\!\!\V_{\!(d)\!}$\;
}
update $\mu$ if necessary
\tcp*{\footnotesize refer to (\ref{eq:mu})}}
}
\caption{Proposed algorithm for (\ref{problem:factorization})}
\label{alg:AO-ADMM}
\end{algorithm}

Since each factor $\H_d$ is updated in a cyclic fashion, one expects that after a certain number of cycles $\H_d$ (and its dual variable $\U_d$) obtained in the previous iteration will not be very far away from the update for the current iteration. In this sense, the outer AO framework naturally provides a good initial point to the inner ADMM iteration. With this warm-start strategy, the optimality gap for the sub-problem is then bounded by the per-step improvement of the AO algorithm, which is small. This mode of operation is crucial for insuring the efficiency of Alg.~\ref{alg:AO-ADMM}. Our experiments suggest that soon after an initial transient stage, the sub-problems can be solved in just one ADMM iteration (with reasonable precision).

Similar ideas can be used for $\tilde{\Y}$ and $\V$ in the matrix case if we want to deal with non-least-squares loss, and actually only one copy of them is needed in the updates of both factors. A few different options are available in the tensor case. If memory is not an issue in terms of the size of $\Yt$, a convenient approach that is commonly adopted in ALS implementations is to store all $N$ matricizations $\Yd{1},...,\Yd{N}$, so they are readily available without need for repetitive data re-shuffling during run-time. If this practice is adopted, then it makes sense to also have $N$ copies of $\tilde{\Yt}$ and $\underline{\V}$, in order to save computation. Depending on the size and nature of the data and how it is stored, it may be completely unrealistic to keep multiple copies of the data and the auxiliary variables, at which point our earlier discussion on scalable implementation of Alg.~\ref{alg:ADMMgl} for big but sparse data can be instrumental.

Sometimes an additional proximal regularization is added to the sub-problems. The benefit is two-fold: it helps the convergence of the AO outer-loop when $N \geq 3$; while for the ADMM inner-loop it improves the conditioning of the sub-problem, which may accelerate the convergence of ADMM, especially in the general loss function case when we do not have strong convexity. {\color{blue} The convergence of AO-ADMM is summarized in Proposition~\ref{prop:convergence}.
\begin{proposition}\label{prop:convergence}
If the sequence generated by AO-ADMM in Alg.~\ref{alg:AO-ADMM} is bounded, then for
\begin{enumerate}
\item $N=2$,
\item $N>2, \mu>0$,
\end{enumerate}
AO-ADMM converges to a stationary point of~(\ref{problem:factorization}).
\end{proposition}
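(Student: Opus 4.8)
The plan is to reduce the convergence of AO-ADMM to the convergence theory of exact block coordinate descent (BCD) by arguing that the inner ADMM loop solves each sub-problem to global optimality. First I would establish that every sub-problem (\ref{eq:loss})---obtained by fixing all factors but one---is convex: the fitting term $l(\Y-\W\H^T)$ is convex in $\H$ because $\W\H^T$ is linear in $\H$ and $l(\cdot)$ is a convex loss, while $r_d(\cdot)$ is a convex (possibly extended-valued, closed, proper) regularizer. Under these standard conditions, ADMM applied to (\ref{problem:least-squares}) or (\ref{problem:general_loss}) converges to the global minimizer of the sub-problem \cite{Boyd2011}. Consequently, in the limit the inner loop returns an exact block-minimizer, so AO-ADMM is an instance of exact BCD (respectively of BSUM when the proximal term is present).

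For the case $N=2$, I would invoke the result of Grippo and Sciandrone \cite{Grippo2000}: for a two-block decomposition in which each block sub-problem is convex and solved exactly, every limit point of the BCD iterates is a stationary point, \emph{without} requiring the sub-problem solution to be unique. Since the sequence is assumed bounded, limit points exist, and the monotone decrease of the objective together with the two-block structure yields convergence to a stationary point of (\ref{problem:factorization}).

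For $N>2$ with $\mu>0$, uniqueness of the block update can fail, so I would instead appeal to the BSUM framework of Razaviyayn \emph{et al.}~\cite{razaviyayn2013unified}. Adding the proximal term $\frac{\mu}{2}\|\cdot-\H_d^r\|_F^2$ with $\mu>0$ renders each sub-problem strongly convex, hence uniquely solvable; this proximal quadratic is trivially absorbed into the $\tilde{\H}$-update (as noted after Alg.~\ref{alg:ADMMls}), so the inner ADMM still converges to the unique minimizer. The BSUM convergence theorem then guarantees that, given boundedness of the iterates, the algorithm converges to a stationary point of (\ref{problem:factorization}).

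The main obstacle I anticipate is bridging the gap between the \emph{exact} block minimization assumed by the BCD and BSUM theorems and the \emph{iterative} nature of ADMM: strictly speaking the inner loop is truncated in practice, so I would either state the result for the idealized exact-solve regime and rely on ADMM's convergence (linear, in the least-squares case) to justify it, or invoke an inexact-BCD argument showing that the per-step optimality gap, controlled by the warm-start behavior, is summable. The remaining care is purely in verifying the regularity hypotheses---closedness, properness, and convexity of each $r_d$ and of $l$---so that both the ADMM and the outer-loop theorems apply verbatim.
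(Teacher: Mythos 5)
Your proposal takes essentially the same route as the paper: the paper's proof is a two-line citation of the two-block convergence result (via \cite[Theorem~3.1]{chen2012maximum}, which the paper itself notes coincides for $N=2$ with the Grippo--Sciandrone result \cite{Grippo2000} you invoke) and of the BSUM theorem \cite[Theorem~2]{razaviyayn2013unified} for the proximal case $\mu>0$, i.e., exactly your reduction of AO-ADMM to exact BCD/BSUM on convex sub-problems. Your closing caveat about the truncated inner ADMM loop versus the exact block minimization assumed by those theorems is a legitimate subtlety that the paper's own proof silently glosses over, so raising it (and resolving it via the idealized exact-solve reading or a summable-inexactness argument) is consistent with, and if anything more careful than, what the paper does.
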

\begin{proof}
The first case with $\mu=0$ is covered in \cite[Theorem~3.1]{chen2012maximum}, and
the cases when $\mu>0$ are covered in \cite[Theorem~2]{razaviyayn2013unified}.
\end{proof}
Note that for $N=2$, using $\mu=0$ yields faster convergence than $\mu>0$. For $N>2$, i.e., for tensor data, we can update $\mu$ as follows
\begin{equation}\label{eq:mu}
\mu \leftarrow 10^{-7} + 0.01\frac{\|\Yt-[\H_d]_{d=1}^N\|}{\|\Yt\|},
\end{equation}
which was proposed in \cite{razaviyayn2013unified} for unconstrained tensor factorization, and works very well in our context as well.

The convergence result in Proposition~\ref{prop:convergence} has an additional assumption that the sequence generated by the algorithm is bounded. For unconstrained PARAFAC, diverging components may be encountered during AO iterations \cite{kruskal1989how,Stegeman:2014:FLD:2598946.2599268}, but adding Frobenious norm regularization for each matrix factor (with a small weight) ensures that the iterates remain bounded. }

As we can see, the ADMM is an appealing sub-routine for alternating optimization, leading to a simple plug-and-play generalization of the workhorse ALS algorithm. Theoretically, they share the same per-iteration complexity if the number of inner ADMM iterations is small, which is true in practice, after an initial transient. Efficient implementation of the overall algorithm should include data-structure-specific algorithms for $\W^T\Y$ or $(\OD_{j \neq d} \H_j)^T\Yd{d}$, which dominate the per-iteration complexity, and may include parallel/distributed computation along the lines of \cite{liavas2014parallel}.

Finally, if a non-least-squares loss is to be used, we suggest that the least-squares loss is first employed to get preliminary estimates (using Alg.~\ref{alg:AO-ADMM} calling Alg.~\ref{alg:ADMMls}) which can then be fed as initialization to run Alg.~\ref{alg:AO-ADMM} calling Alg.~\ref{alg:ADMMgl}. The main disadvantage of Alg.~\ref{alg:ADMMgl} compared to Alg.~\ref{alg:ADMMls} is that the big matrix (or tensor) multiplication $\W^T(\tilde{\Y}+\V)$ needs to be calculated in each ADMM iteration. Therefore, this strategy can save a significant amount of computations at the initial stage.

\section{Case Studies and Numerical Results}
In this section we will study some well-known constrained matrix/tensor factorization problems, derive the corresponding update for $\H$ in Alg.~\ref{alg:ADMMls} or $\H$ and $\tilde{\Y}$ in Alg.~\ref{alg:ADMMgl}, and compare it to some of the state-of-the-art algorithms for that problem. In all examples we denote our proposed algorithm as {\bf AO-ADMM}. All experiments are performed in MATLAB 2015a on a Linux server with 32 Xeon 2.00GHz cores and 128GB memory.

\subsection{Non-negative Matrix and Tensor Factorization}
Perhaps the most common constraint imposed on the latent factors is non-negativity -- which is often supported by physical considerations (e.g., when the latent factors represent chemical concentrations, or power spectral densities) or other prior information, or simply because non-negativity sometimes yields interpretable factors \cite{lee1999learning}. Due to the popularity and wide range of applications of NMF, numerous algorithms have been proposed for fitting the NMF model, and most of them can be easily generalized to the tensor case. After a brief review of the existing algorithms for NMF, we compare our proposed algorithm to some of the best algorithms reported in the literature to showcase the efficiency of AO-ADMM.

Let us start by considering NMF with least-squares loss, which is the prevailing loss function in practice. By adopting the alternating optimization framework, the sub-problem that emerges for each matrix factor is non-negative (linear)  least-squares (NNLS). Some of the traditional methods for NNLS are reviewed in \cite{chen2009nonnegativity} (interestingly, not including ADMM), and most of them have been applied to NMF or non-negative PARAFAC, e.g., the active-set (AS) method \cite{bro1997fast,kim2008nonnegative} and block-principle-pivoting (BPP) \cite{kim2011fast,kim2012fast}. Recall that in the context of the overall multi-linear factorization problem we actually need to solve a large number of (non-negative) least-squares problems sharing the same mixing matrix $\W$, and in the unconstrained case this means we only need to calculate the Cholesky factorization of $\W^T\W$ once. Unfortunately, this good property that enables high efficiency implementation of ALS is not preserved by either AS or BPP. Sophisticated methods that group similar rows to reduce the number of inversions have been proposed \cite{van2004fast}, although as $k$ grows larger this does not seem appealing in the worst case. Some other methods, like the multiplicative-update (MU) \cite{lee2001algorithms} or hierarchical alternating least squares (HALS) \cite{cichocki2009fast}, ensure that the per-iteration complexity is dominated by calculating $\W^T\W$ and $\W^T\Y$, although more outer-loops are needed for convergence. These are actually one step majorization-minimization or block coordinate descent applied to the NNLS problem. An accelerated version of MU and HALS is proposed in \cite{gillis2012accelerated}, which essentially does a few more inner-loops after computing the most expensive $\W^T\Y$.

ADMM, on the other hand, may not be the fastest algorithm for a single NNLS problem, yet its overhead can be amortized when there are many NNLS problem instances sharing the same mixing matrix, especially if good initialization is readily available. This is in contrast to an earlier attempt to adopt ADMM to NMF \cite{cai2013nonnegative}, which did not use Cholesky caching, warm start, and a good choice of $\rho$ to speed up the algorithm. Furthermore, ADMM can seamlessly incorporate different regularizations as well as non-least-squares loss.

We should emphasize that AO forms the backbone of our proposed algorithm -- ADMM is only applied to the sub-problems. There are also algorithms that directly apply an ADMM approach to the whole problem \cite{xu2012alternating,liavas2014parallel,sun2014alternating}. The per-iteration complexity of those algorithms is also the same as the unconstrained alternating least-squares. However, due to the non-convexity of the whole problem, the loss is not guaranteed to decrease monotonically, unlike alternating optimization. Moreover, both ADMM and AO guarantee that every limit point is a stationary point, but in practice AO almost always converges {\color{blue}(as long as the updates stay bounded)}, which is not the case for ADMM applied to the whole problem.

{\color{blue}
In another recent line of work \cite{xu2013block}, a similar idea of using an improved AO framework to ensure convergence is used. When \cite{xu2013block} is specialized to non-negative matrix/tensor factorization, each update becomes a simple proximal-gradient step with an extrapolation. The resulting algorithm is also guaranteed to converge (likewise assuming that the iterates remain bounded), but it turns out to be slower than our algorithm, as we will show in our experiments. 
}

To apply our proposed algorithm to NMF or non-negative PARAFAC with least-squares loss, Alg.~\ref{alg:ADMMls} is used to solve the sub-problems, with line~8 customized as
\[
\H \leftarrow \left[ \tilde{\H}^T - \U \right]_+,
\]
i.e., zeroing out the negative values of $(\tilde{\H}^T - \U)$. The tolerance for the ADMM inner-loop is set to $0.01$.

\subsubsection{Non-negative matrix factorization}
We compare AO-ADMM with the following algorithms:
\begin{itemize}
\item[] {\bf AO-BPP.} AO using block principle pivoting \cite{kim2011fast}\footnote{\url{http://www.cc.gatech.edu/~hpark/nmfsoftware.php}};
\item[] {\bf accHALS.} Accelerated HALS \cite{gillis2012accelerated}\footnote{\url{https://sites.google.com/site/nicolasgillis/code}};
\item[] {\color{blue} {\bf APG.} Alternating proximal gradient \cite{xu2013block}\footnote{\url{http://www.math.ucla.edu/~wotaoyin/papers/bcu/matlab.html}};}
\item[] {\bf ADMM.} ADMM applied to the whole problem \cite{xu2012alternating} \footnote{\url{http://mcnf.blogs.rice.edu/}}.
\end{itemize}
AO-BPP and HALS are reported in \cite{kim2011fast} to outperform other methods, accHALS is proposed in \cite{gillis2012accelerated} to improve HALS, APG is reported in \cite{xu2013block} to outperform AO-BPP, and we include ADMM applied to the whole problem to compare the convergence behavior of AO and ADMM for this non-convex factorization problem.

The aforementioned NMF algorithms are tested on two datasets. One is a dense image data set, the Extended Yale Face Database B\footnote{\url{http://vision.ucsd.edu/~leekc/ExtYaleDatabase/ExtYaleB.html}}, of size $32256 \times 1932$, where each column is a vectorized $168 \times 192$ image of a face, and the dataset is a collection of face images of 29 subjects under various poses and illumination conditions. The other one is the Topic Detection and Tracking 2 (TDT2) text corpus\footnote{\url{http://www.cad.zju.edu.cn/home/dengcai/Data/TextData.html}}, of size $10212 \times 36771$, which is a sparse document-term matrix where each entry counts the frequency of a term in one document.

The convergence of the relative error $\|\Y-\W\H^T\|_F/\|\Y\|_F$ versus time in seconds for the Extended Yale B dataset is shown in Fig. \ref{fig:YaleB}, with $k=100$ on the left and $k=300$ on the right; and for the TDT2 dataset in Fig. \ref{fig:TDT2}, with $k=500$ on the left and $k=800$ on the right. The ADMM algorithm \cite{xu2012alternating} is not included for TDT2 because the code provided online is geared towards imputation of matrices with missing values -- it does not treat a sparse input matrix as the full data, unless we fill-in all zeros.

\begin{figure}[t!]
\centering
\begin{subfigure}[t]{0.4\textwidth}
\includegraphics[width=\textwidth]{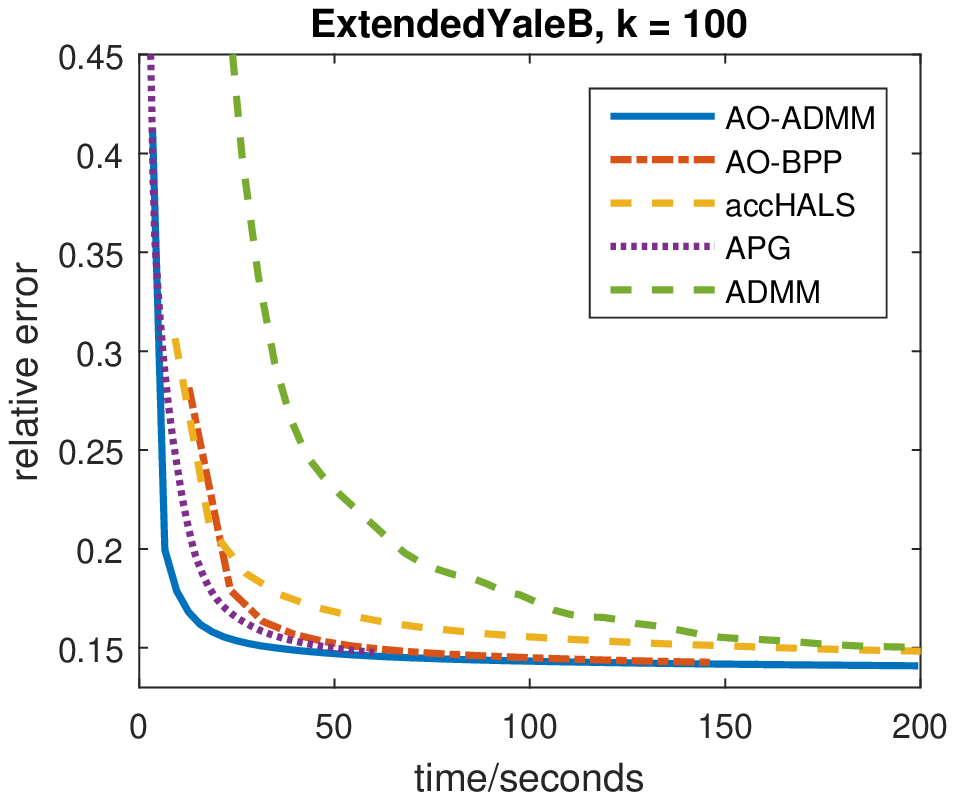}
\end{subfigure}
\begin{subfigure}[t]{0.4\textwidth}
\includegraphics[width=\textwidth]{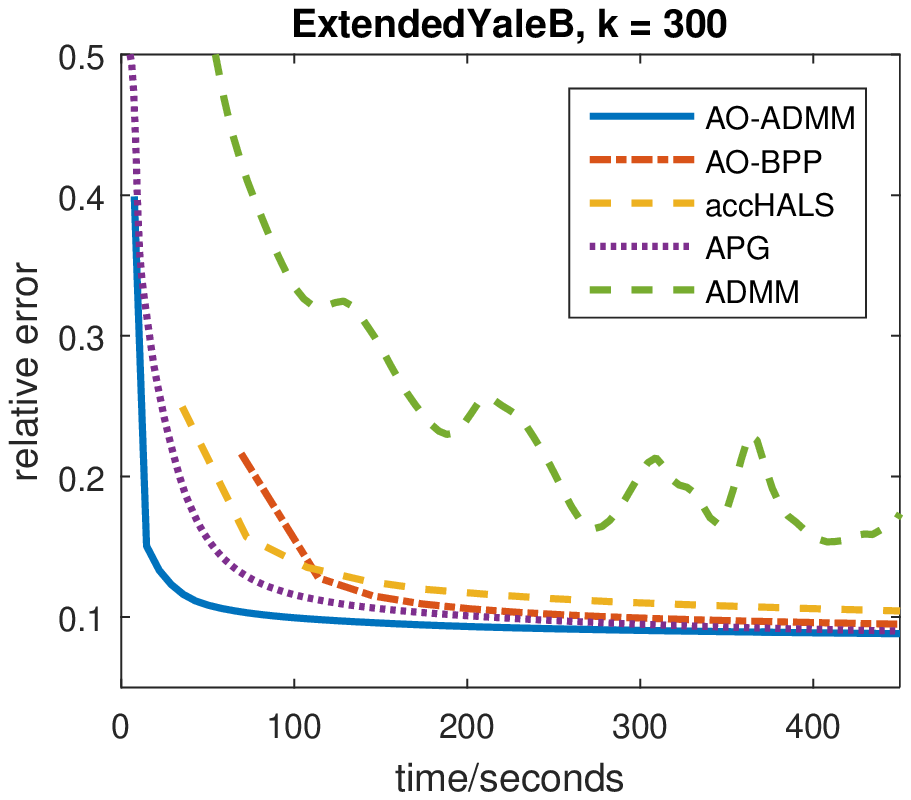}
\end{subfigure}
\caption{Convergence of some NMF algorithms on the Extended Yale B dataset.}
\label{fig:YaleB}
\end{figure}
\begin{figure}[t!]
\centering
\begin{subfigure}[t]{0.4\textwidth}
\includegraphics[width=\textwidth]{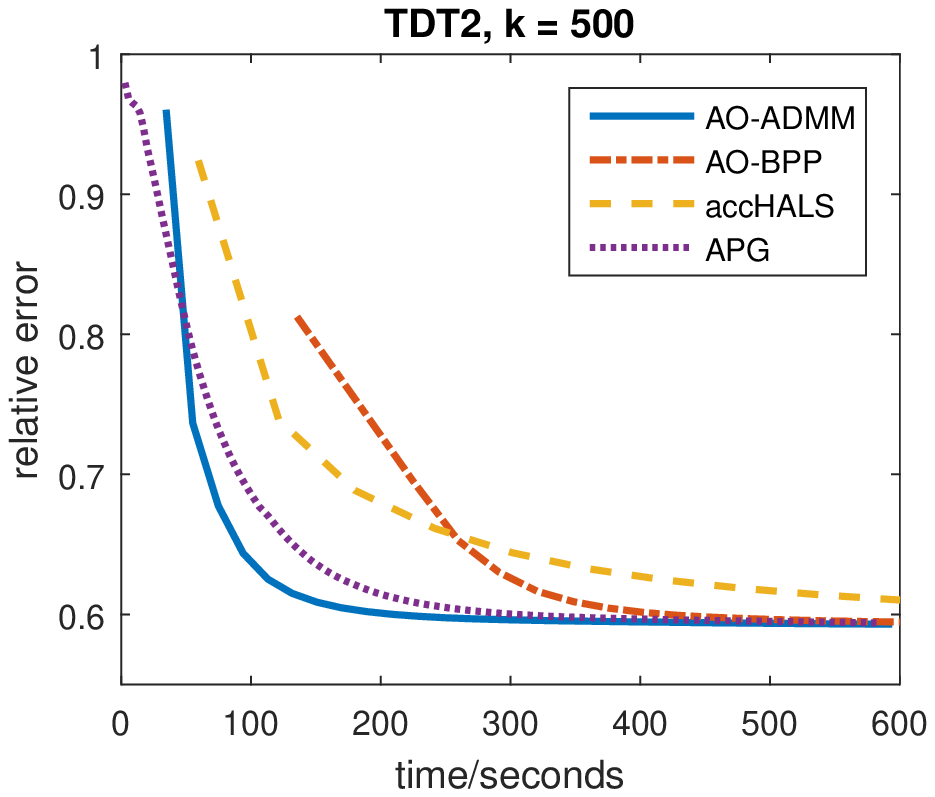}
\end{subfigure}
\begin{subfigure}[t]{0.4\textwidth}
\includegraphics[width=\textwidth]{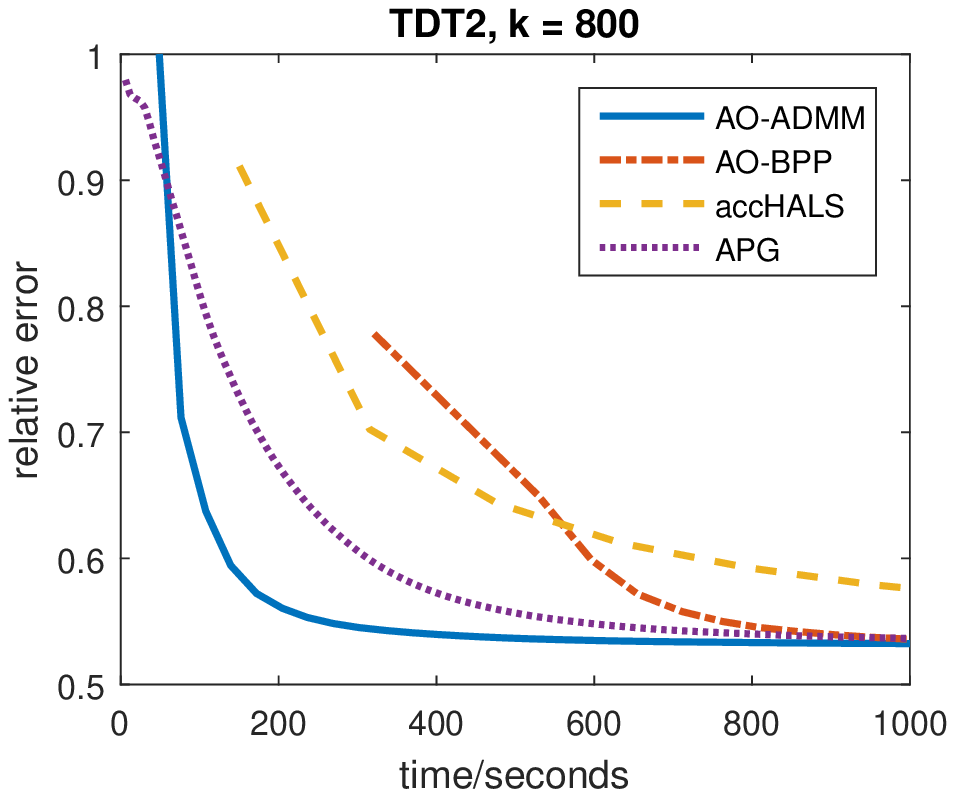}
\end{subfigure}
\caption{Convergence of some NMF algorithms on the TDT2 dataset.}
\label{fig:TDT2}
\end{figure}

\begin{table}[t!]
\caption{Averaged performance of NMF algorithms on synthetic data.}
\label{tab:NMF}
\small
\begin{center}
\renewcommand{\arraystretch}{1.2}
\begin{tabular}{lccc}
\hline
Algorithm    & $\|\Y-\W\H^T\|_F$ & run time & iterations \\
\hline
AO-ADMM & 193.1026 & 21.7s & 86.9 \\
AO-BPP 	& 193.1516 & 40.9s & 52.2 \\
accHALS	& 193.1389 & 26.8s & 187.0\\
APG		& 193.1431 & 25.3s & 240.2 \\
ADMM	& 193.6808 & 31.9s & 125.2\\
\hline
\end{tabular}
\end{center}
\vspace*{-0.3cm}
\end{table}
We also tested these algorithms on synthetic data. For $m=n=2000$ and $k=100$, the true $\W$ and $\H$ are generated by drawing their elements from an i.i.d. exponential distribution with mean 1, and then 50\% of the elements are randomly set to 0. The data matrix $\Y$ is then set to be $\Y=\W\H^T + \N$, where the elements of $\N$ are drawn from an i.i.d. Gaussian distribution with variance 0.01. The averaged results of 100 Monte-Carlo trials are shown in Table~\ref{tab:NMF}. As we can see, AO-based methods are able to attain smaller fitting errors than directly applying ADMM to this non-convex problem, while AO-ADMM provides the most efficient per-iteration complexity.

\subsubsection{Non-negative PARAFAC}
Similar algorithms are compared in the non-negative PARAFAC case:
\begin{itemize}
\item[] {\bf AO-BPP.} AO using block principle pivoting \cite{kim2012fast}\footnotemark[1];
\item[] {\bf HALS.} Hierarchical alternating least-squares \cite{cichocki2009fast}\footnotemark[1];
\item[] {\bf APG.} Alternating proximal gradient \cite{xu2013block}\footnotemark[2];
\item[] {\bf ADMM.} ADMM applied to the whole problem \cite{liavas2014parallel}.
\end{itemize}
For our proposed AO-ADMM algorithm, a diminishing proximal regularization term in the form (\ref{eq:bsum_update}) is added to each sub-problem to enhance the overall convergence, with the regularization parameter $\mu$ updated as (\ref{eq:mu}).

Two real datasets are being tested: one is a dense CT image dataset\footnote{\url{http://www.nlm.nih.gov/research/visible/}} of size $260 \times 190 \times 150$, which is a collection of 150 CT images of a female's ankle, each with size $260 \times 190$; the other one is a sparse social network dataset -- Facebook Wall Posts\footnote{\url{http://konect.uni-koblenz.de/networks/facebook-wosn-wall}}, of size $46952 \times 46951 \times 1592$, that collects the number of wall posts from one Facebook user to another over a period of 1592 days. The sparse tensor is stored in the \texttt{sptensor} format supported by the \texttt{tensor\_toolbox}\cite{TTB_Software}, and all the aforementioned algorithms use this toolbox to handle sparse tensor data.

\begin{figure}[t!]
\centering
\begin{subfigure}[t]{0.4\textwidth}
\includegraphics[width=\textwidth]{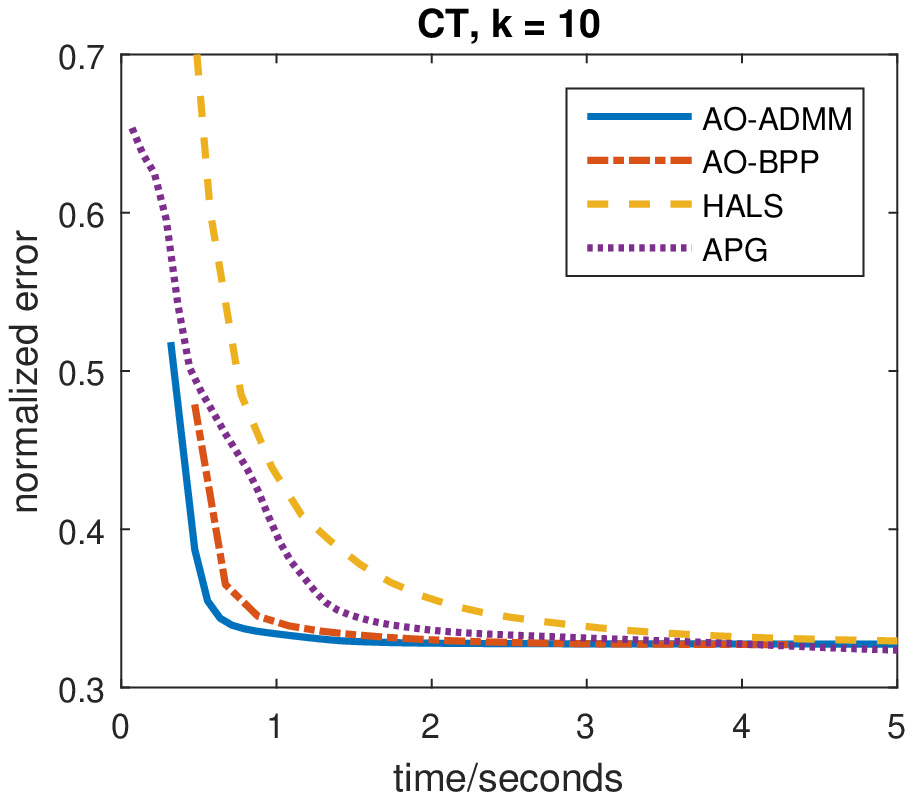}
\end{subfigure}
\begin{subfigure}[t]{0.4\textwidth}
\includegraphics[width=\textwidth]{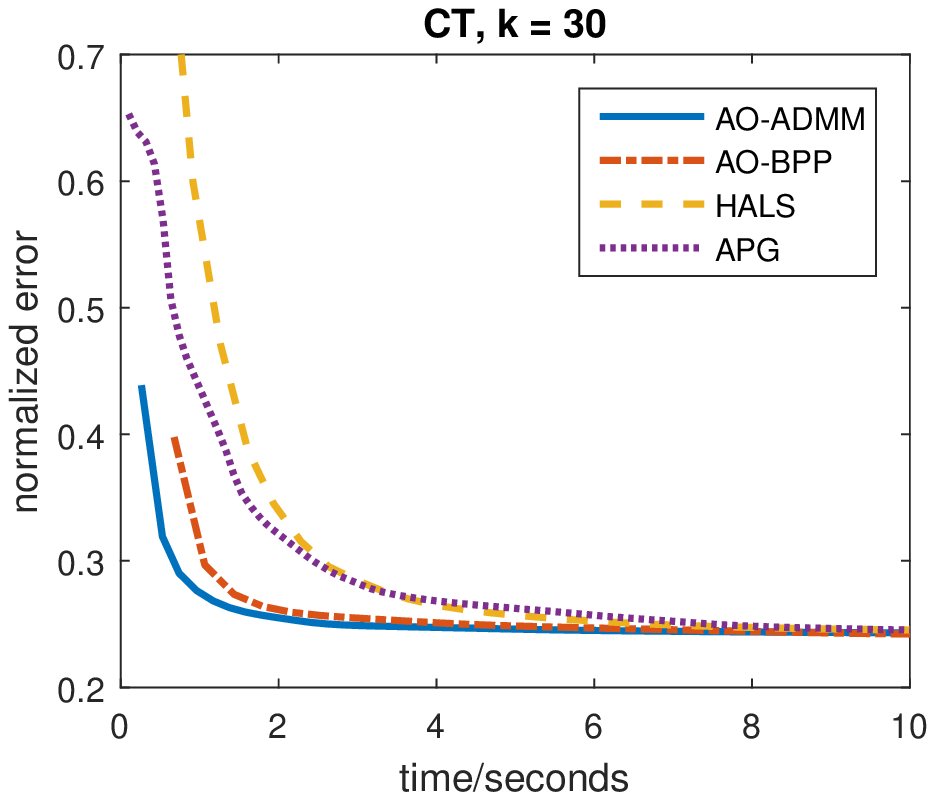}
\end{subfigure}
\caption{Convergence of some non-negative PARAFAC algorithms on the CT dataset.}
\label{fig:CT}
\end{figure}
\begin{figure}[t!]
\centering
\vspace{-1.3pt}
\begin{subfigure}[t]{0.4\textwidth}
\includegraphics[width=\textwidth]{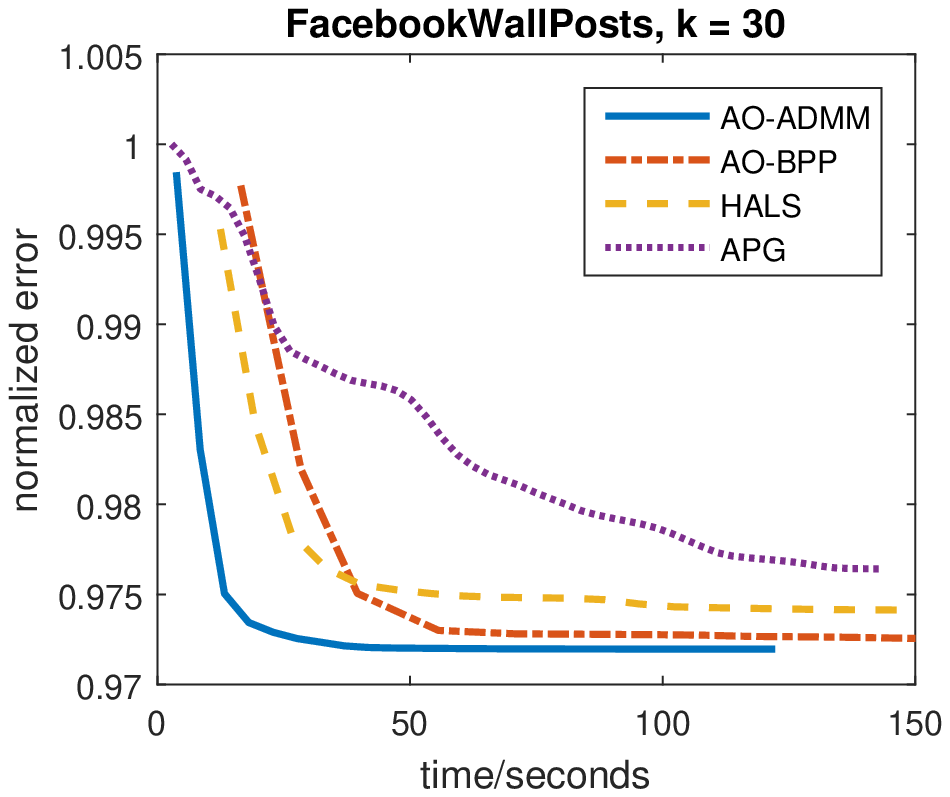}
\end{subfigure}
\begin{subfigure}[t]{0.4\textwidth}
\includegraphics[width=\textwidth]{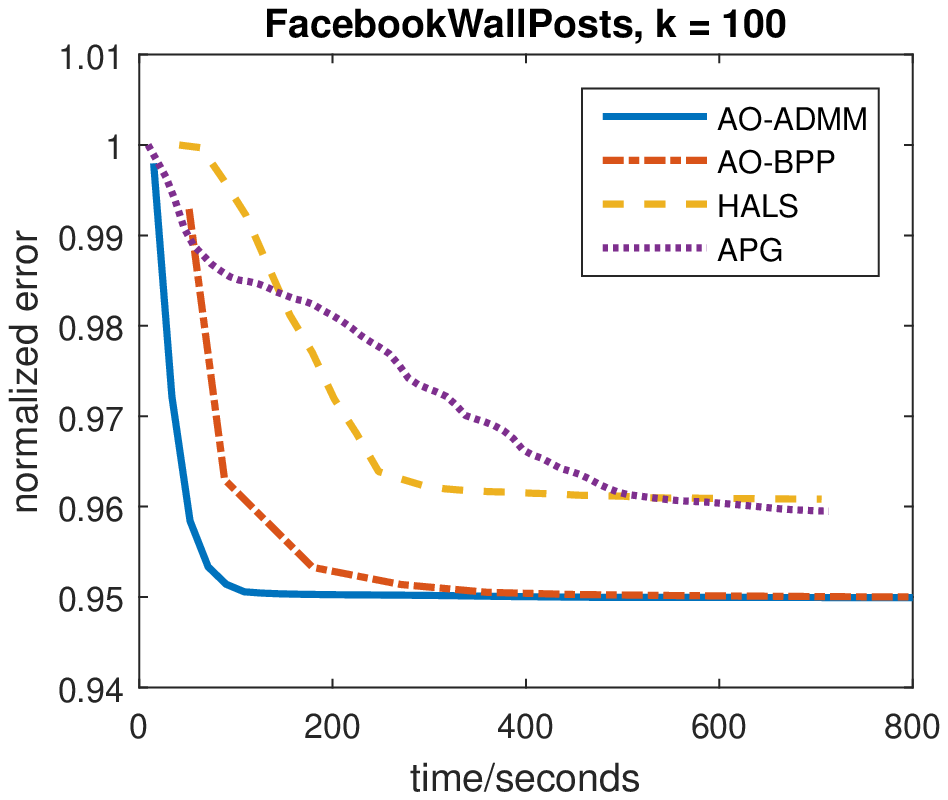}
\end{subfigure}
\caption{Convergence of some non-negative PARAFAC algorithms on the Facebook Wall Posts dataset.}
\label{fig:facebook}
\end{figure}

\begin{table}[t!]
\caption{Averaged performance of non-negative PARAFAC algorithms on synthetic data}
\label{tab:NTF}
\small
\begin{center}
\renewcommand{\arraystretch}{1.2}
\begin{tabular}{lccc}
\hline
Algorithm    & {\scriptsize $\|\Yt-[\H_1,\H_2,\H_3]\|$}
					& run time & iterations \\
\hline
AO-ADMM & 1117.597 & 145.2s & 25.1 \\
AO-BPP 	& 1117.728 & 679.0s & 22.6 \\
HALS	& 1117.655 & 1838.7s & 137.7\\
APG		& 1117.649 & 1077.4s & 156.3\\
ADMM	& 1156.799 & 435.9s & 77.2 \\
Tensorlab	& 1118.427 & 375.8s & N/A \\
\hline
\end{tabular}
\end{center}
\vspace*{-0.3cm}
\end{table}

Similar to the matrix case, the normalized root mean squared error versus time in seconds for the CT dataset is shown in Fig.~\ref{fig:CT}, with $k=10$ on the left and $k=30$ on the right, and that for the Facebook Wall Posts data is shown in Fig.~\ref{fig:facebook}, with $k=30$ on the left and $k=100$ on the right. As we can see, AO-ADMM again converges the fastest, not only because of the efficient per-iteration update from Alg.~\ref{alg:ADMMls}, but also thanks to the additional proximal regularization to help the algorithm avoid swamps, which are not uncommon in alternating optimization-based algorithms for tensor decomposition.

Monte-Carlo simulations were also conducted using synthetic data for 3-way non-negative tensors with $n_1=n_2=n_3=500$ and $k=100$, with the latent factors generated in the same manner as for the previous NMF synthetic data, and the tensor data generated as the low-rank model synthesized from those factors plus i.i.d. Gaussian noise with variance $0.01$. The averaged result over 100 trials is given in Table \ref{tab:NTF}. For this experiment we have also included \texttt{Tensorlab} \cite{tensorlab}, which handles non-negative PARAFAC using ``all-at-once'' updates based on the Gauss-Newton method. As we can see, AO-ADMM again outperforms all other algorithms in all cases considered.

\subsection{Constrained Matrix and Tensor Completion}
As discussed before, real-world data are often stored as a sparse array, i.e., in the form of (\texttt{index,value}) pairs. Depending on the application, the unlisted entries in the array can be treated as zeros, or as not (yet) observed but possibly nonzero. A well-known example of the latter case is the {\em Netflix prize problem}, which involves an array of movie ratings indexed by customer and movie. The data is extremely sparse, but the fact that a customer did not rate a movie does not mean that the customer's rating of that movie would be zero -- and the goal is actually to predict those unseen ratings to provide good movie recommendations.

For matrix data with no constraints on the latent factors, convex relaxation techniques that involve the matrix nuclear norm have been proposed with provable matrix reconstruction bounds \cite{candes2009exact}. Some attempts have been made to generalize the matrix nuclear norm to tensor data \cite{gandy2011tensor,liu2013tensor}, but that boils down to the Tucker model rather than the PARAFAC model that we consider here. A key difference is that Tucker modeling can only hope to impute (recover missing values) in the data, whereas PARAFAC can uniquely recover the latent factors -- the important `dimensions' of consumer preference in this context. Another key difference is that the aforementioned convex relaxation techniques cannot incorporate constraints on the latent factors, which can improve the estimation performance. Taking the Netflix problem as an example, {\em user-bias} and {\em movie-bias} terms are often successfully employed in recommender systems; these  can be easily subsumed in the factorization formulation by constraining, say, the first column of $\W$ and the second column of $\H$ to be equal to the all-one vector. Moreover, interpreting each column of $\W$ ($\H$) as the appeal of a certain movie genre to the different users (movie ratings for a given type of user, respectively), it is natural to constrain the entries of $\W$ and $\H$ to be non-negative.

When matrix/tensor completion is formulated as a constrained factorization problem using a loss function as in Sec. \ref{sec:loss}, there are traditionally two ways to handle it. One is directly using alternating optimization, although due to the random positions of the missing values, the least-squares problem for each row of $\H$ will involve a different subset of the rows of $\W$, thus making the update inefficient even in the unconstrained case. A more widely used way is an instance of expectation-maximization (EM): one starts by filling the missing values with zeros, and then iteratively fits a (constrained) low-rank model and imputes the originally missing values with predictions from the interim low-rank model. More recently, an ADMM approach that uses an auxiliary variable for the full data was proposed \cite{xu2012alternating}, although if we look carefully at that auxiliary variable, it is exactly equal to the filled-in data given by the EM method.

\begin{figure}[t!]
\centering
\includegraphics[width=.8\textwidth]{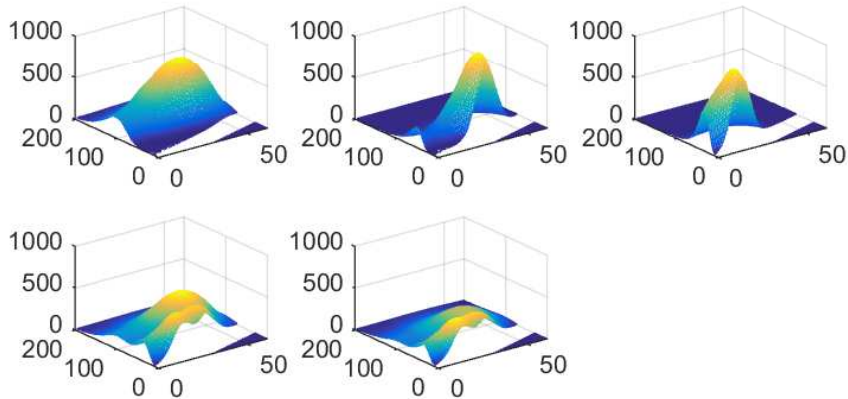}
\caption{Illustration of the missing values in the Amino acids fluorescence data.}
\label{fig:Ex2_samples}
\end{figure}
\begin{figure}[t!]
\centering
\includegraphics[width=.8\textwidth]{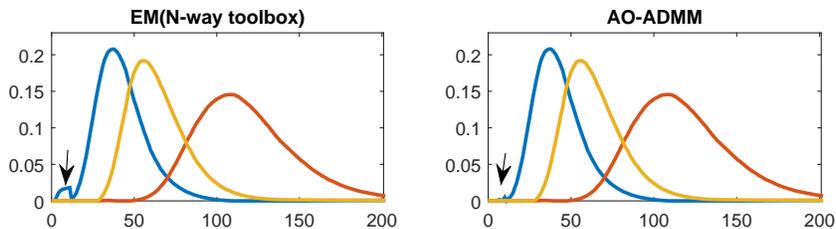}
\caption{The emission loadings ($\H_2$) produced by the $N$-way toolbox on the left, which uses EM, and by AO-ADMM on the right.}
\label{fig:Ex2_emissions}
\end{figure}
In fact, the auxiliary variable $\tilde{\Y}$ that we introduce is similar to that of \cite{xu2012alternating}, thus also related to the way that EM imputes the missing values---one can treat our method as imputing the missing values per ADMM inner-loop, the method in \cite{xu2012alternating} as imputing per iteration, and EM as imputing after several iterations. However, our proposed AO-ADMM is able to give better results than EM, despite the similarities. As an illustrative example, consider the Amino acids fluorescence data\footnote{\url{http://www.models.kvl.dk/Amino_Acid_fluo}}, which is a $5 \times 201 \times 61$ tensor known to be generated by a rank-3 non-negative PARAFAC model. However, some of the entries are known to be badly contaminated, and are thus deleted, as shown in Fig. \ref{fig:Ex2_samples}. Imposing non-negativity on the latent factors, the emission loadings $\H_2$ of the three chemical components provided by the EM method using the $N$-way toolbox\cite{andersson2000n} and AO-ADMM are shown in Fig. \ref{fig:Ex2_emissions}. While both results are satisfactory, AO-ADMM is able to suppress the artifacts caused by the systematically missing values in the original data, as indicated by the arrows in Fig. \ref{fig:Ex2_emissions}.

We now evaluate our proposed AO-ADMM on a movie rating dataset called MovieLens\footnote{\url{http://grouplens.org/datasets/movielens/}}, which consists of 100,000 movie ratings from 943 users on 1682 movies. MovieLens includes 5 sets of 80\%-20\% splits of the ratings for training and testing, and for each split we fit a matrix factorization model based on the 80\% training data, and evaluate the correctness of the model on the 20\% testing data. The averaged performance on this 5-fold cross validation is shown in Fig.~\ref{fig:movielens}, where we used the mean absolute error (MAE) for comparison with the classical collaborative filtering result \cite{sarwar2001item} (which attains a MAE of 0.73). On the left of Fig.~\ref{fig:movielens}, we used the traditional least-squares criterion to fit the available ratings, whereas on the right we used the Kullback-Leibler divergence for fitting, since it is a meaningful statistical model for integer data. For each fitting criterion, we compared the performance by imposing Tikhonov regularization $(\lambda/2)\|\cdot\|_F^2$ with $\lambda=0.1$, or non-negativity, or non-negativity with biases (i.e., in addition constraining the first column of $\W$ and second column of $\H$ to be all ones). Some observations are as follows:
\begin{itemize}
\item Low-rank indeed seems to be a good model for this movie rating data, and the right rank seems to be 4 or 5, higher rank leads to over-fitting, as evident from Fig.~\ref{fig:movielens};
\item Imposing non-negativity reduces the over-fitting at higher ranks, whereas the fitting criterion does not seem to be playing a very important role in terms of performance;
\item By adding biases, the best case prediction MAE at rank 4 is less than 0.69, an approximately 6\% improvement over the best result reported in \cite{sarwar2001item}.
\end{itemize}
Notice that our aim here is to showcase how AO-ADMM can be used to explore possible extentions to the matrix completion problem formulation, rather than come up with the best recommender system method, which would require significant exploration in its own right. We believe with the versability of AO-ADMM, researchers can easily test various models for matrix/tensor completion, and quickly narrow down the one that works the best for their specific application.

\begin{figure}[t!]
\centering
\includegraphics[width=.8\textwidth]{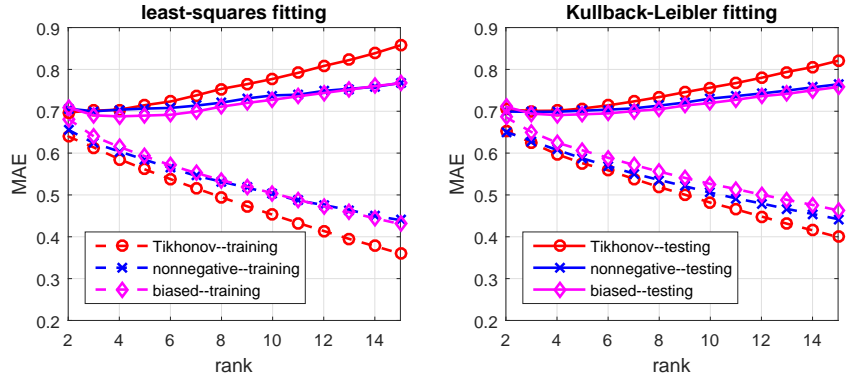}
\caption{Training and testing mean absolute error (MAE) versus model rank of the MovieLens data, averaged over a 5-fold cross validation, comparing least-squares fitting (on the left) and Kullback-Leibler fitting (on the right), with Tikhonov regularization, non-negativity constraint, or non-negativity with biases on the latent factors.}
\label{fig:movielens}
\end{figure}

\subsection{Dictionary Learning}
Many natural signals can be represented as an (approximately) sparse linear combination of some (possibly over-complete) basis, for example the Fourier basis for speech signals and the wavelet basis for images. If the basis (or {\em dictionary} when over-complete) is known, one can directly do data compression via greedy algorithms or convex relaxations to obtain the sparse representation\cite{bruckstein2009sparse}, or even design the sensing procedure to reduce the samples required for signal recovery\cite{candes2008introduction}. If the dictionary is not known, then one can resort to the so called \emph{dictionary learning} (DL) to try to learn a sparse representation\cite{tosic2011dictionary}, if one exists. The well-known benchmark algorithm for DL is called $k$-SVD\cite{aharon2006img}, which is a geometry-based algorithm, and can be viewed as a generalization of the clustering algorithms $k$-means and $k$-planes. However, as noted in the original paper, $k$-SVD does not scale well as the size of the dictionary increases. Thus $k$-SVD is often used to construct a dictionary of small image patches of size $8\times 8$, with a few hundreds of atoms.

DL can also be formulated as a matrix factorization problem
\begin{equation}\label{prob:DL}
\begin{aligned}
\minimize_{\D,\Sb}~~& \half\| \Y - \D\Sb \|_F^2 + r(\Sb) \\
\st~~ & \D \in {\cal D},
\end{aligned}
\end{equation}
where $r(\cdot)$ is a sparsity inducing regularization, e.g., the cardinality, the $l_1$ norm, or the log penalty; conceptually there is no need for a constraint on $\D$, however, due to the scaling ambiguity inherent in the matrix factorization problem, we need to impose some norm constraint on the scaling of $\D$ to make the problem better defined. For example, we can bound the norm of each atom in the dictionary, $||\d_i|| \leq 1, \forall i=1,...,k$, where $\d_i$ is the $i$-th column of $\D$, and we adopt this constraint here.

Although bounding the norm of the columns of $\D$ works well, it also complicates the update of $\D$---without this constraint, each row of $\D$ is the solution of an independent least-squares problem sharing the same mixing matrix, while the constraint couples the columns of $\D$, making the problem non-separable. Existing algorithms either solve it approximately\cite{razaviyayn2014dictionary} or by sub-optimal methods like cyclic column updates\cite{mairal2010online}. On the other hand, this is not a problem at all for our proposed ADMM sub-routine Alg.~\ref{alg:ADMMls}: the row separability of the cost function and the column separability of the constraints are handled separately by the two primal variable blocks, while our previously discussed Cholesky caching, warm starting, and good choice of $\rho$ ensure that an exact dictionary update can be done very efficiently.

The update of $\Sb$, sometimes called the sparse coding step, is a relatively well-studied problem for which numerous algorithms have been proposed. We mainly focus on the $l_1$ regularized formulation, in which case the sub-problem becomes the well-known LASSO, and in fact a large number of LASSOs sharing the same mixing matrix. Alg.~\ref{alg:ADMMls} can be used by replacing the proximity step with the soft-thresholding operator. Furthermore, if an over-complete dictionary is trained, the least-squares step can also be accelerated by using the matrix inversion lemma:
\[
(\D^T\D + \rho\I)^{-1} =
\rho^{-1}\I - \rho^{-1}\D^T(\rho\I+\D\D^T)^{-1}\D.
\]
Thus, if $m \ll k$, one can cache the Cholesky of $\rho\I+\D\D^T=\L\L^T$ instead, and replace the least-squares step in Alg.~\ref{alg:ADMMls} with
\[
\tilde{\Sb} \leftarrow \rho^{-1}(\B - \D^T\L^{-T}\L^{-1}\D\B),
\]
where $\B=\D^T\Y + \rho(\Sb+\U)$. The use of ADMM for LASSO is also discussed in \cite{afonso2010fast,yang2011alternating,esser2013method}, and \cite{Boyd2011}, and we generally followed the one described in \cite[\S 6]{Boyd2011}. Again, one should notice that compared to a plain LASSO, our LASSO sub-problem in the AO framework comes with a good initialization, therefore only a very small number of ADMM-iterations are required for convergence.

It is interesting to observe that for the particular constraints and regularization used in DL, incorporating non-negativity maintains the simplicity of our proposed algorithm---for both the norm bound constraint and $l_1$ regularization, the proximity operator in Alg.~\ref{alg:ADMMls} with non-negativity constraint simply requires zeroing out the negative values before doing the same operations. In some applications non-negativity can greatly help the identification of the dictionary \cite{hoyer2004non}.

As an illustrative example, we trained a dictionary from the MNIST handwritten digits dataset\footnote{\url{http://www.cs.nyu.edu/~roweis/data.html}}, which is a collection of gray-scale images of handwritten digits of size $28 \times 28$, and for each digit we randomly sampled 1000 images, forming a matrix of size $784 \times 10,000$. Non-negativity constraints are imposed on both the dictionary and the sparse coefficients. For $k=100$, and by setting the $l_1$ penalty parameter $\lambda=0.5$, the trained dictionary after 100 AO-ADMM (outer-)iterations is shown in Fig.~\ref{fig:mnist}. On average approximately 11 atoms are used to represent each image, and the whole model is able to describe approximately 60\% of the energy of the original data, and the entire training time takes about 40 seconds. Most of the atoms in the dictionary remain readable, which shows the good interpretability afforded by the additional non-negativity constraint.

{\color{blue} For comparison, we tried the same data set with the same parameter settings with the popular and well-developed DL package SPAMS\footnote{\url{http://spams-devel.gforge.inria.fr/index.html}}. For fair comparison, we used SPAMS in batch mode with batch size equal to the size of the training data, and run it for 100 iterations (same number of iterations as AO-ADMM). The quality of the SPAMS dictionary is almost the same as that of AO-ADMM, but it takes SPAMS about 3 minutes to run through these 100 iterations, versus 40 seconds for AO-ADMM. The performance does not change much if we remove the non-negativity constraint when using SPAMS, although the resulting dictionary then loses interpretability. Notice that SPAMS is fully developed in C++, whereas our implementation is simply written in MATLAB, which leaves considerable room for speed improvement using a lower-level language compiler.}
\begin{figure}[t!]
\centering
\includegraphics[width=.6\textwidth]{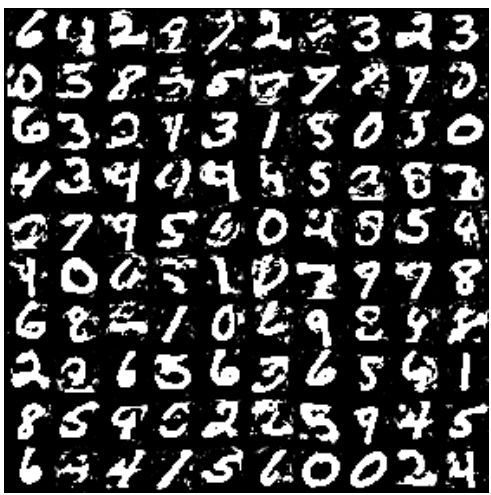}
\vspace{-10pt}
\caption{Trained dictionary from the MNIST handwritten digits dataset.}
\label{fig:mnist}
\end{figure}

\section{Conclusion}
In this paper we proposed a novel AO-ADMM algorithmic framework for matrix and tensor factorization under a variety of constraints and loss functions. The main advantages of the proposed AO-ADMM framework are:
\begin{itemize}
\item {\bf Efficiency.} By carefully adopting AO as the optimization backbone and ADMM for the individual sub-problems, a significant part of the required computations can be effectively cached, leading to a per-iteration complexity similar to the workhorse ALS algorithm for unconstrained factorization. Warm-start that is naturally provided by AO together with judicious regularization and choice of parameters further reduce the number of inner ADMM and outer AO iterations.
\item {\bf Universality.} Thanks to ADMM, which is a special case of the proximal algorithm, non-least-squares terms can be handled efficiently with element-wise complexity using the well-studied proximity operators. This includes almost all non-parametric constraints and regularization penalties commonly imposed on the factors, and even non-least-squares fitting criteria.
\item {\bf Convergence.} AO guarantees monotone decrease of the loss function, which is a nice property for the NP-hard factorization problems considered. Moreover, recent advances on generalizations of the traditional BCD algorithms further guarantee convergence to a stationary point.
\end{itemize}

Case studies on non-negative matrix/tensor factorization, constrained matrix/tensor completion, and dictionary learning, with extensive numerical experiments using real data, corroborate our main claims. We believe that AO-ADMM can serve as a plug-and-play framework that allows easy exploration of different types of constraints and loss functions, as well as different types of matrix and tensor (co-)factorization models.

\bibliographystyle{plain}
\bibliography{ao,admm,nmf,tensor,dl,others}
\end{document}